\documentclass[runningheads]{llncs}

% ---------------------------------------------------------------
% Include basic ECCV package
 
% TODO REVIEW: Insert your submission number below by replacing '*****'
% TODO FINAL: Comment out the following line for the camera-ready version
%\usepackage[review,year=2024,ID=*****]{eccv}
% TODO FINAL: Un-comment the following line for the camera-ready version
\usepackage{eccv}

% OPTIONAL: Un-comment the following line for a version which is easier to read
% on small portrait-orientation screens (\eg, mobile phones, or beside other windows)
%\usepackage[mobile]{eccv}

% ---------------------------------------------------------------
% Other packages

% Commonly used abbreviations (\eg, \ie, \etc, \cf, \etal, etc.)
\usepackage{eccvabbrv}

% Include other packages here, before hyperref.
\usepackage{graphicx}
\usepackage{booktabs}

% The "axessiblity" package can be found at: https://ctan.org/pkg/axessibility?lang=en
\usepackage[accsupp]{axessibility}  % Improves PDF readability for those with disabilities.

% ---------------------------------------------------------------
% Hyperref package

% It is strongly recommended to use hyperref, especially for the review version.
% Please disable hyperref *only* if you encounter grave issues.
% hyperref with option pagebackref eases the reviewers' job, but should be disabled for the final version.
%
% If you comment hyperref and then uncomment it, you should delete
% main.aux before re-running LaTeX.
% (Or just hit 'q' on the first LaTeX run, let it finish, and you
%  should be clear).

% TODO FINAL: Comment out the following line for the camera-ready version
%\usepackage[pagebackref,breaklinks,colorlinks,citecolor=eccvblue]{hyperref}
% TODO FINAL: Un-comment the following line for the camera-ready version
\usepackage{hyperref}

% Support for ORCID icon
\usepackage{orcidlink}

\usepackage{algorithm}
\usepackage{algorithmic}

\def \R {\mathbb{R}}
\def \x {\mathbf{x}}
\def \z {\mathbf{z}}
\def \w {\mathbf{w}}
\def \LL {\mathcal{L}}
\def \OO {\mathcal{O}}

\newtheorem{thm}{Theorem}
\newtheorem{prop}{Proposition}

\definecolor{da}{gray}{0.6}

\begin{document}

% ---------------------------------------------------------------
% TODO REVIEW: Replace with your title
\title{Online Zero-Shot Classification with CLIP} 

% TODO REVIEW: If the paper title is too long for the running head, you can set
% an abbreviated paper title here. If not, comment out.
%\titlerunning{Online Zero-Shot Classification with CLIP}

% TODO FINAL: Replace with your author list. 
% Include the authors' OCRID for the camera-ready version, if at all possible.
\author{Qi Qian\inst{1}\orcidlink{0009-0007-8661-1169}\thanks{Corresponding author} \and
Juhua Hu\inst{2}\orcidlink{0000-0001-5869-3549}}

% TODO FINAL: Replace with an abbreviated list of authors.
\authorrunning{Q.~Qian and J.~Hu}
% First names are abbreviated in the running head.
% If there are more than two authors, 'et al.' is used.

% TODO FINAL: Replace with your institution list.
\institute{Alibaba Group, Bellevue, WA 98004, USA \and
School of Engineering and Technology, \\University of Washington, Tacoma, WA 98402, USA\\
\email{qi.qian@alibaba-inc.com, juhuah@uw.edu}}

\maketitle

\begin{abstract}
  Vision-language pre-training such as CLIP enables zero-shot transfer that can classify images according to the candidate class names. While CLIP demonstrates an impressive zero-shot performance on diverse downstream tasks, the distribution from the target data has not been leveraged sufficiently. In this work, we study a novel online zero-shot transfer scenario, where each image arrives in a random order for classification and is visited only once to obtain prediction immediately without storing its representation. Compared with the vanilla zero-shot classification, the proposed framework preserves its flexibility for online service while considering the statistics of the arrived images as the side information to capture the distribution of target data, which can help improve the performance of real-world applications. To tackle the challenge of effective online optimization, we first develop online label learning to model the target data distribution. Then, the proxy of each class in the vision space is further optimized with the proposed online proxy learning method to mitigate the modality gap between images and text. The convergence of both online strategies can be theoretically guaranteed. By combining the predicted label from the online label learning and proxy learning, our online zero-shot transfer method (OnZeta) achieves $78.94\%$ accuracy on ImageNet without accessing the entire data set. Moreover, extensive experiments on other 13 downstream tasks with different vision encoders show a more than $3\%$ improvement on average, which demonstrates the effectiveness of our proposal. Code is available at \url{https://github.com/idstcv/OnZeta}.
  \keywords{Online learning \and Zero-shot classification \and CLIP}
\end{abstract}

\section{Introduction}
\label{sec:intro}

Vision-language pre-training has attracted much attention recently due to its impressive zero-shot transfer performance on various downstream tasks. The desired property mainly comes from aligning the vision and text space. For example, one of the most prevalent pre-training methods, \ie, CLIP~\cite{clip}, consists of distinct vision and text encoders for learning image and text representations, respectively. These encoders are optimized by minimizing a contrastive loss defined on image-text pairs. The loss aims to pull images and their corresponding text descriptions together while pushing the irrelevant text or images away~\cite{softtriple}. 

\begin{figure}[t]
\centering
\includegraphics[height = 1.25in]{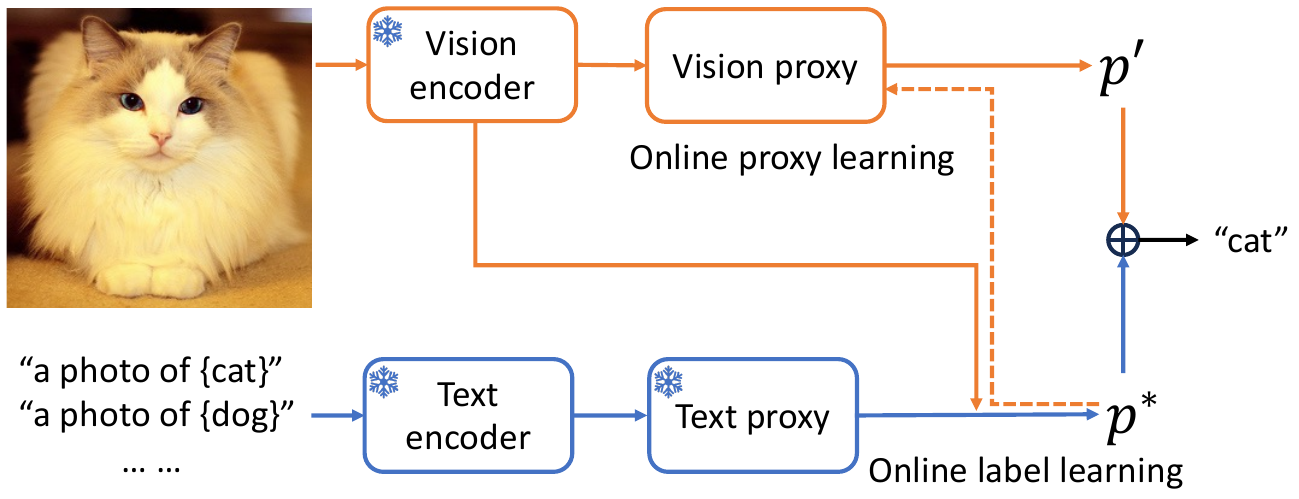}
\caption{Illustration of the proposed online zero-shot transfer method (OnZeta). Blue and orange lines denote the inference in the text and vision space, respectively. By incorporating the predictions from online label learning and online proxy learning, OnZeta can leverage the biased prediction from the text space to reduce the variance in the target vision space in an online manner.}\label{fig:illu}
\end{figure}

%introduce zero-shot transfer 
After the alignment between vision and text, a novel zero-shot classification paradigm emerges to transfer knowledge in pre-trained models to diverse tasks. Concretely, given all class names from the target task, each class can be represented by a class proxy that applies the text encoder in CLIP for the corresponding class name, which is also termed a text proxy. When an image arrives, its representation is extracted by the vision encoder of CLIP. Then, the label can be assigned by the 1-nearest neighbor (1-NN) classifier with the set of text proxies as the reference. Without any fine-tuning, this simple strategy achieves $77.02\%$ accuracy on ImageNet~\cite{imagenet} using the pre-trained ViT~\cite{vit} as the vision encoder.

%introduce improvement
Since the success of CLIP, many methods have been developed to explore different side information from the target task to further improve the transfer performance. Firstly, if a few labeled images are available from the target task, the zero-shot transfer can be formulated as a few-shot learning problem and the input text prompt can be learned with the labeled vision data by fine-tuning~\cite{coop}. Secondly, if the computational resource is too limited to do fine-tuning, the labeled data can be cached as a vision classifier to classify images directly~\cite{tip}.
However, supervised information is often rare, while unsupervised information is more accessible. Therefore, many efforts have been devoted to exploiting the unlabeled images to enhance the zero-shot performance. \cite{tpt} generates multiple augmented examples from a given image, and then learns the specific text prompt for each image as few-shot learning accordingly. \cite{inmap} considers reconstructing the class proxy in the vision space, \ie, vision proxy, by leveraging a set of unlabeled images in proxy learning. While \cite{inmap} shows superior zero-shot performance, it optimizes proxies in an offline manner. Due to the privacy issue, gathering or keeping a set of unlabeled images is still challenging in some real-world applications.

%introduce our problem: online zero-shot
Therefore, in this work, we investigate a novel practical scenario for zero-shot transfer, termed online zero-shot transfer. Concretely, when streaming images arrive as in conventional zero-shot learning, the model has to classify the image immediately without refining. After that, the class proxy can be updated but the representations of arrived images cannot be kept. Unlike \cite{inmap} that can access the whole unlabeled set, this online setting is way more challenging, where only statistics of seen images can be leveraged for optimization and each image is visited only once. Thereafter, the standard iterative optimization in \cite{inmap} is not applicable, while the problem (\ie, online service to classify each unlabeled image on the fly without storing data) is ubiquitous in real-world applications (\eg, mobiles and robots).

\begin{figure}[t]
\centering
\includegraphics[height = 1.1in]{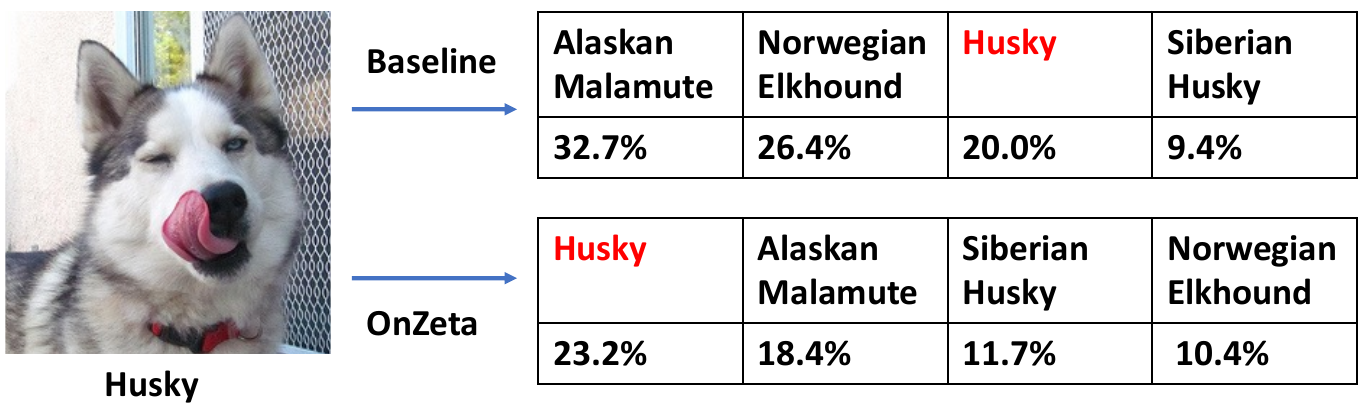}
\caption{Top 4 predicted labels with corresponding probabilities from baseline zero-shot method by CLIP and our proposal.}\label{fig:illu_example}
\end{figure}

%introduce our method:onzeta
To address the online service challenge, we first consider exploring the data distribution of the target task to improve the prediction from the text proxy. Note that the vanilla zero-shot prediction focuses on individual images and ignores the distribution over the entire data set. To capture the distribution of all data, we develop an online label learning algorithm to balance the assignment between different classes globally. In addition, the class proxy from the text (\ie, text proxy) can be biased for the vision data due to the modality gap, and thus we propose online proxy learning to obtain the class proxy in the vision space (\ie, vision proxy) directly in an online manner. Finally, the prediction from both the text space and vision space can be combined to help reduce the variance of learning in the vision space as illustrated in the framework of our online zero-shot transfer (OnZeta) shown in \cref{fig:illu}. More importantly, with online optimization, our proposal OnZeta can leverage the information from target data effectively and efficiently, which helps correct the prediction compared to the baseline zero-shot method. For example, \cref{fig:illu_example} shows that an image of ``Husky'' will be misclassified by vanilla CLIP. By considering the distribution over the whole data set and incorporating the learned vision proxy, our method can obtain the right label in an online manner. The main contributions of this work can be summarized as follows.

\begin{itemize}
\item Considering the practical need for online service (\ie, classifying each image on the fly without storage) and improving the performance of vanilla zero-shot classification, we study a novel setting, that is, online zero-shot transfer, using pre-trained vision-language models.
\item To address the online service challenge, we propose an online label learning method to trace the distribution of assigned labels over the entire data in an online manner. Moreover, an online vision proxy learning method is proposed to further improve the class proxy in vision space for classification. The convergence of both methods can be theoretically guaranteed.
\item Experiments on 14 downstream tasks demonstrate the effectiveness of our proposal OnZeta. Concretely, when visiting each image only once, OnZeta outperforms the baseline by a margin of more than $3\%$ on average without storing any unlabeled data.
\end{itemize}

\section{Related Work}
\label{sec:related}
CLIP consists of two encoders (\ie, vision and text), and the optimization can be conducted on each of them individually. In this section, we briefly review research efforts in these two directions. 

\subsection{Optimization with Text Encoder}
CLIP demonstrates that the zero-shot performance heavily depends on the text prompts for generating appropriate class proxies, and many methods investigate learning the optimal text prompt as input to the text encoder. CoOp~\cite{coop} marks the input text prompts as learnable variables and optimizes them via few-shot learning with a limited number of labeled images for each class. CoCoOp~\cite{cocoop} further includes the vision representation as the conditional context for prompt learning. TPT~\cite{tpt} relaxes the requirement of labeled examples and optimizes the consistency between different augmentations from the same image for prompt learning. However, it has to optimize the prompt for each image with $64$ augmentations, which is inefficient for computation and memory, making it infeasible for real-time applications as the online scenario in this work.

\subsection{Optimization with Vision Encoder}
While prompt learning can improve the performance on the target data, recent research shows that optimizing the vision representation can be more effective for classification~\cite{clipadapter,inmap}. CLIP-Adapter~\cite{clipadapter} introduces an additional adapter network after the vision encoder and optimizes the vision representation with few-shot learning. Although a similar adapter can be attached for the text encoder, the empirical study shows that optimizing the vision space is sufficient for transfer and can achieve better performance than either the version with only a text adapter or that with both vision and text adapters. \cite{robustft} shows that fine-tuning the vision classifier directly with few-shot learning can achieve better performance than CoOp. Recently, the analysis in \cite{inmap} demonstrates that the optimal class proxy for visual classification lives in the vision space. To mitigate the modality gap, \cite{inmap} proposes proxy learning that aims to learn class proxy for each class in vision space. However, it requires a set of unlabeled data and will learn proxies iteratively. In this work, we also aim to refine the class proxy in the vision space, but with the learned labels in an online manner, which is more applicable for online real applications.

\section{Online Zero-shot Transfer}
\label{sec:method}

\subsection{Zero-shot Transfer in CLIP}
Given a data set consisting of image-text pairs as $\{I_i, T_i\}_{i=1}^n$, where $I_i$ denotes an image and $T_i$ is the corresponding text such as class names or captions, CLIP trains vision and text encoders jointly to align the vision subspace and its corresponding text subspace. Let $f(\cdot)$ and $g(\cdot)$ denote the vision encoder and text encoder, respectively. The representations for images and text can be extracted as $\x_i = f(I_i)$ and $\z_i = g(T_i)$.

After pre-training two encoders, zero-shot classification can be implemented by the 1-nearest neighbor (1-NN) classifier. Concretely, given the class names of the target task, the proxy of $j$-th class $\z_j$ can be obtained by applying the text encoder for the prompt as ``a photo of a \{class name\}''. Then, the image $I_i$ will be classified as
\begin{eqnarray}\label{eq:vanilla}
y_i = \arg\max_j \x_i^\top \z_j
\end{eqnarray}

While the conventional zero-shot paradigm demonstrates an impressive transfer performance, the side information from the target task has not been explored sufficiently. Recent work shows that with only a set of unlabeled target data, the zero-shot performance can be improved dramatically~\cite{inmap}. In this work, we further relax the requirement of unlabeled target data and investigate a novel online zero-shot transfer scenario, where each unlabeled image arrives in an online manner without storing.

\subsection{Online Label Learning}
First, we develop an online label learning method to leverage the information from arrived images to capture the distribution over the entire set. 

Given text proxy ${\z_j}$, the predicted distribution over classes for $I_i$ can be computed as
\begin{eqnarray}
q_{i,j} = \frac{\exp(\x_i^\top \z_j/\tau_T)}{\sum_k^C \exp(\x_i^\top \z_k/\tau_T)}
\end{eqnarray}
where $\tau_T$ is the temperature optimized by CLIP.

If the whole image set is available, the label can be refined by considering the distribution over classes. Let $\{p_i\}$ denote the learnable labels, and the optimization problem with $C$ classes can be cast as
\begin{eqnarray}\label{eq:alpha}
\min_{p_i\in\Delta} \frac{1}{n}\sum_iD_{KL}(p_i||q_i)\quad\quad s.t.\quad \forall j,\quad \frac{1}{n}\sum_i p_{i,j}\geq \frac{\alpha}{C} 
\end{eqnarray}
where $p_i = [p_{i,1},\dots,p_{i,C}]$, $q_i = [q_{i,1},\dots,q_{i,C}]$ and $\Delta$ denotes the simplex as $\Delta = \{p_i\in\R^{C}|\sum_j^C p_{i,j}=1;\forall j, p_{i,j}\geq 0\}$. The constraint bounds the minimal size of each class to avoid collapsing and $\alpha\in[0,1]$ is the ratio for balancing the assignment between different classes. $\alpha=1$ denotes a well-balanced distribution that each class has the same proportion of images, while $\alpha=0$ keeps the prediction of the original zero-shot classification from $q_i$.

The problem can be solved effectively given all examples, but optimizing the objective with streaming images is challenging. To address the online problem, we introduce the Lagrange dual variables~\cite{convex} $\{\rho_j\}$ for constraints and rewrite the problem as
\begin{align}\label{eq:dual}
\LL(\rho, p_i) = \max_{\rho_j\geq 0}\min_{p_i\in\Delta} \frac{1}{n}\sum_iD_{KL}(p_i||q_i) - \sum_j^C \rho_j (\frac{1}{n}\sum_i p_{i,j} - \frac{\alpha}{C} )
\end{align}

When the $i$-th example arrives for prediction, the sub-problem with fixed dual variables becomes 
\begin{eqnarray}\label{eq:label}
\LL'(p_i) = \min_{p_i\in\Delta} D_{KL}(p_i||q_i) - \sum_j^C \rho_j^{i-1} p_{i,j}
\end{eqnarray}
where $\rho_j^{i-1}$ indicates the dual variable used for the $(i-1)$-th arrived image.

Fortunately, this problem has the closed-form solution as follows.
\begin{prop}\label{prop:1}
The optimal solution to the problem in Eqn.~\ref{eq:label} is
\begin{eqnarray}\label{eq:psolution}
p_{i,j}^* = \frac{q_{i,j}\exp(\rho_j^{i-1})}{\sum_k^C q_{i,k}\exp(\rho_k^{i-1})}
\end{eqnarray}
\end{prop}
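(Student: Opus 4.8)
The plan is to recognize Eqn.~\ref{eq:label} as a convex program over the probability simplex $\Delta$ and solve it through its Karush--Kuhn--Tucker conditions. The objective $D_{KL}(p_i\|q_i)-\sum_j^C \rho_j^{i-1}p_{i,j}$ is convex in $p_i$, since the relative entropy is convex in its first argument and the remaining term is linear, and $\Delta$ is a convex set; hence any feasible point satisfying the stationarity and complementary-slackness conditions is automatically a global minimizer, so there is nothing left to check once such a point is exhibited. The only structural subtlety is the role of the nonnegativity constraints $p_{i,j}\ge 0$: I will argue they do not bind at the optimum, so that only the normalization $\sum_j^C p_{i,j}=1$ needs an explicit multiplier.

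To handle the boundary, note that every $q_{i,j}$ is strictly positive because it is the output of a softmax, so $D_{KL}(p_i\|q_i)$ is finite throughout $\Delta$; moreover $\partial\big(p_{i,j}\log(p_{i,j}/q_{i,j})\big)/\partial p_{i,j}=\log(p_{i,j}/q_{i,j})+1\to-\infty$ as $p_{i,j}\to 0^+$, so the objective is strictly decreasing in $p_{i,j}$ near the face $\{p_{i,j}=0\}$ and the minimizer lies in the relative interior. Introducing a single multiplier $\lambda$ for $\sum_j^C p_{i,j}=1$ and setting the gradient of the Lagrangian to zero gives, for each $j$,
\[
\log(p_{i,j}/q_{i,j})+1-\rho_j^{i-1}+\lambda=0 \quad\Longrightarrow\quad p_{i,j}=q_{i,j}\exp(\rho_j^{i-1})\exp(-1-\lambda),
\]
so $p_{i,j}$ is proportional to $q_{i,j}\exp(\rho_j^{i-1})$. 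Enforcing $\sum_j^C p_{i,j}=1$ pins down $\exp(-1-\lambda)=1/\sum_k^C q_{i,k}\exp(\rho_k^{i-1})$, which is exactly the closed form asserted in Eqn.~\ref{eq:psolution}.

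To finish, I would verify that this candidate is feasible (it is nonnegative and sums to one by construction) and invoke the convexity observation from the first paragraph to conclude it is the unique global minimizer. I do not anticipate a genuine obstacle: the computation is the standard exponentiated-gradient closed form, and the only point deserving an explicit sentence is the boundary/regularity remark, namely that the full support of $q_i$ makes the KL term act as a barrier keeping the solution away from $\partial\Delta$, so the KKT system collapses to the single stationarity equation solved above.
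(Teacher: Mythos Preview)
Your proposal is correct and takes the same approach as the paper: the paper's entire proof is the one-line remark ``It is from the K.K.T. condition,'' and you have simply written out that KKT computation in full, including the (helpful) observation that the strict positivity of the softmax outputs $q_{i,j}$ keeps the minimizer in the relative interior so only the normalization constraint carries a multiplier.
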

\begin{proof}
It is from the K.K.T. condition~\cite{convex}.
\end{proof}
\paragraph{Remark} Proposition~\ref{prop:1} shows that the dual variable helps balance the assignment between classes. When $\rho=0$ without optimization, it degenerates to the original prediction from CLIP.

\begin{algorithm}[t]
\caption{\textbf{On}line \textbf{Lab}el Learning (OnLab)}
\begin{algorithmic}[1]
\STATE {\bf Input:} Unlabeled image set $\{I_i\}$, class names $\{T_j\}$, pre-trained CLIP encoders $(f,g)$, dual variables $\{\rho_j\}$, ratio $\alpha$, temperature $\tau_T$, initial learning rate $c_\rho$
\STATE Obtain representation of text proxy as $\z_j = g(T_j)$
\STATE Initialize $\forall j, \rho_j = 0$
\WHILE{the $i$-th image arrives} 
\STATE Obtain vision representation as $\x_i = f(I_i)$
\STATE Observe the probability $p_i^*$ by Eqn.~\ref{eq:psolution}
\STATE Have the prediction as $y_i=\arg\max_j p_{i,j}^*$
\STATE Update dual variables as in Eqn.~\ref{eq:dualupdate}
\ENDWHILE
\end{algorithmic}\label{alg:onlab}
\end{algorithm}

With the prediction for the image, the dual variable can be updated by gradient ascent as
\begin{eqnarray}\label{eq:dualupdate}
\rho_j^{i} = \max\{0,\rho_j^{i-1} - \eta_\rho^i(p_{i,j}^* - \alpha/C)\}
\end{eqnarray}
where $\eta_\rho^i$ is the learning rate for the dual variable. When $\{\rho_j\}$ is initialized as $0$ and $\alpha=0$, dual variables will not be updated, which leads to the original prediction.

Our \textbf{on}line \textbf{lab}el learning (OnLab) algorithm is summarized in Alg.~\ref{alg:onlab}. The prediction will be obtained by a close-form solution and the updating for dual variables is also efficient, which makes the proposed method applicable for real-time applications.

Besides the efficiency, the effectiveness of OnLab can be theoretically guaranteed in the following theorem. The detailed proof can be found in the appendix.
\begin{thm}\label{thm:1}
By running Alg.~\ref{alg:onlab} for solving the problem in Eqn.~\ref{eq:dual} with the step size $\eta_\rho^i = c_\rho/\sqrt{i}$, where $c_\rho$ is a constant, the convergence can be guaranteed as
\begin{eqnarray}
\frac{1}{n}\max_\rho\sum_i \LL(\rho, p_i^*) - \min_{p_i}\sum_i\LL(\rho^{i-1}, p_i)\leq \OO(\frac{1}{\sqrt{n}})
\end{eqnarray}
\end{thm}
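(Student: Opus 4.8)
The plan is to cast this as a standard online convex optimization argument for the dual variable $\rho$, treating the iterates $\rho^{i-1}$ as produced by online (sub)gradient ascent with step size $\eta_\rho^i = c_\rho/\sqrt{i}$. First I would fix the structure: for each arrived image $i$, the inner minimization over $p_i \in \Delta$ is solved exactly by Proposition~\ref{prop:1}, so $p_i^*$ is a function of $\rho^{i-1}$ and $q_i$, and $\LL'(p_i^*)$ equals the value of the dual objective's $i$-th summand at $\rho^{i-1}$. The key observation is that, viewed as a function of $\rho$, the per-example term $\ell_i(\rho) := \min_{p_i\in\Delta} D_{KL}(p_i\|q_i) - \sum_j \rho_j(p_{i,j} - \alpha/C)$ is concave in $\rho$ (a pointwise minimum of affine functions), and the update \eqref{eq:dualupdate} is exactly projected gradient ascent on $\ell_i$ with gradient component $-(p_{i,j}^* - \alpha/C)$ evaluated at $p_i^* = p_i^*(\rho^{i-1})$ — this is a valid supergradient of $\ell_i$ at $\rho^{i-1}$ by Danskin's theorem.

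Next I would establish the two ingredients any such regret bound needs: a bounded feasible set and bounded gradients. For the gradient norm, each coordinate of the supergradient is $|p_{i,j}^* - \alpha/C| \le 1$, so $\|g_i\| \le \sqrt{C}$. For the domain, even though \eqref{eq:dual} writes $\max_{\rho \ge 0}$ over an unbounded orthant, one argues that the optimal $\rho$ and all iterates stay in a bounded box — e.g.\ since each constraint violation $\frac{1}{n}\sum_i p_{i,j} - \alpha/C$ is bounded in magnitude and the primal objective $D_{KL}(p_i\|q_i)$ is bounded when $q_{i,j}$ is bounded away from $0$ (which holds because $q_i$ is a softmax over bounded logits $\x_i^\top\z_j/\tau_T$ with normalized representations), the competitor $\rho$ achieving the max lies in a ball of radius $R = O(1)$; the projection $\max\{0,\cdot\}$ keeps iterates nonnegative, and I'd note (or impose) that they also stay bounded, so $\|\rho^i\|\le R$. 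Then the standard online gradient ascent regret telescoping with $\eta_\rho^i = c_\rho/\sqrt i$ gives
\begin{eqnarray}
\sum_{i=1}^n \ell_i(\rho) - \sum_{i=1}^n \ell_i(\rho^{i-1}) \le \frac{R^2}{2\eta_\rho^n} + \frac{C}{2}\sum_{i=1}^n \eta_\rho^i = O(\sqrt n),
\end{eqnarray}
using $\sum_{i=1}^n 1/\sqrt i = O(\sqrt n)$, and this holds for every fixed $\rho$, hence for the maximizing one.

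To finish, I would relate the two sides of the claimed inequality to $\sum_i \ell_i$. The right-hand side $\min_{p_i}\sum_i \LL(\rho^{i-1},p_i)$ — where the minimization is understood as the separable inner minimization over each $p_i \in \Delta$ against its own frozen dual $\rho^{i-1}$ — equals $\sum_i \ell_i(\rho^{i-1})$ up to the constant $\sum_i \frac{\alpha}{C}\sum_j \rho_j^{i-1}$ bookkeeping terms that are already absorbed into $\ell_i$'s definition. The left-hand side $\max_\rho \sum_i \LL(\rho, p_i^*)$: plugging the optimal $p_i^*$ back in, $\LL(\rho,p_i^*) = D_{KL}(p_i^*\|q_i) - \sum_j \rho_j(p_{i,j}^* - \alpha/C)$, and since $p_i^*$ was optimal for $\ell_i(\rho^{i-1})$ but not necessarily for $\ell_i(\rho)$, we have $\LL(\rho,p_i^*) \ge \ell_i(\rho)$, so $\max_\rho\sum_i\LL(\rho,p_i^*) \ge \max_\rho\sum_i \ell_i(\rho)$ — wait, the inequality needs to go the other way for the bound, so more carefully I'd use that at $p_i^*$ the $D_{KL}$ term is fixed and $\max_\rho$ of an affine-in-$\rho$ function over the bounded domain is controlled; the cleaner route is to bound $\max_\rho\sum_i\LL(\rho,p_i^*) - \sum_i\ell_i(\rho^{i-1}) \le \max_\rho\sum_i[\ell_i(\rho)+\text{gap}_i(\rho)] - \sum_i\ell_i(\rho^{i-1})$ where $\mathrm{gap}_i(\rho) = \LL(\rho,p_i^*) - \ell_i(\rho) \ge 0$ measures suboptimality of $p_i^*$ for $\rho$, and show this total gap is itself $O(\sqrt n)$ because consecutive duals $\rho^{i-1}$ are close ($\|\rho^i - \rho^{i-1}\| \le \eta_\rho^i\sqrt C$) so $p_i^*(\rho^{i-1})$ is nearly optimal for nearby $\rho$, and the $\ell_i$ are Lipschitz in $\rho$; summing these $O(\eta_\rho^i)$ terms again gives $O(\sqrt n)$. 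Dividing by $n$ yields the $O(1/\sqrt n)$ rate.

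The main obstacle I expect is the domain-boundedness issue: the dual problem as written in \eqref{eq:dual} has $\rho$ ranging over the unbounded nonnegative orthant, and online gradient ascent regret bounds require a bounded comparator (or at least a bound on $\|\rho^\star\|$ and on the iterates). Making this rigorous requires either (i) a structural argument that strong duality plus boundedness of the primal values forces $\|\rho^\star\| \le R$ for an explicit $R$ depending on $\alpha$, $C$, $\tau_T$ and the logit range, or (ii) restricting the $\max_\rho$ in the theorem statement to a ball of radius $R$ from the outset. The second subtlety — cleanly matching the two expressions in the theorem statement to $\sum_i \ell_i(\cdot)$ and accounting for the "frozen dual" in the $\min_{p_i}$ term versus the coupled $\rho$ in the $\max_\rho$ term — is mostly notational bookkeeping, but it is where an incautious write-up could silently drop the cross terms; I would be careful to track exactly which $\rho$ each $p_i$ is paired with on each side.
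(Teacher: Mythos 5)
Your overall machinery — online gradient ascent on the dual with $\eta_\rho^i=c_\rho/\sqrt i$, a bounded gradient $\|p_i^*-\tfrac{\alpha}{C}\mathbf{1}\|_2\le 1$, a bounded comparator, and the telescoping sum — is exactly what the paper uses, and your flagging of the unbounded-orthant issue is fair: the paper resolves it only by assuming $\|\rho\|_2\le\gamma$ for the comparator, precisely your option (ii). However, there is a genuine flaw in how you connect the regret bound to the theorem's left-hand side. You run the OGA analysis on the concave per-round dual functions $\ell_i(\rho)=\min_{p_i\in\Delta}\LL(\rho,p_i)$, which bounds $\max_\rho\sum_i\ell_i(\rho)-\sum_i\ell_i(\rho^{i-1})$; but the theorem asks for $\max_\rho\sum_i\LL(\rho,p_i^*)-\sum_i\ell_i(\rho^{i-1})$, and since $\LL(\rho,p_i^*)\ge\ell_i(\rho)$ this is a strictly stronger statement. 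Your proposed repair — that $\mathrm{gap}_i(\rho)=\LL(\rho,p_i^*)-\ell_i(\rho)$ sums to $O(\sqrt n)$ because consecutive iterates satisfy $\|\rho^i-\rho^{i-1}\|\le\eta_\rho^i\sqrt C$ — is a non sequitur: $\mathrm{gap}_i(\rho^\star)$ is the concavity slack of $\ell_i$ between $\rho^{i-1}$ and the \emph{fixed maximizing comparator} $\rho^\star$, which can be at distance $\Theta(R)$ from every iterate, so each gap term can be $\Theta(1)$ and the sum $\Theta(n)$. The closeness of successive iterates is irrelevant to this quantity.

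The fix is to not pass through $\ell_i(\rho)$ at all, and it is what the paper does. Since $\LL(\rho,p_i^*)$ is \emph{affine} in $\rho$ with slope $-(p_{i,j}^*-\tfrac{\alpha}{C})$, the identity $\LL(\rho,p_i^*)-\LL(\rho^{i-1},p_i^*)=\sum_j(\rho_j^{i-1}-\rho_j)(p_{i,j}^*-\tfrac{\alpha}{C})$ holds with equality, and this inner product is exactly the quantity the projected-ascent step in Eqn.~\ref{eq:dualupdate} controls via $\frac{\|\rho^{i-1}-\rho\|_2^2-\|\rho^i-\rho\|_2^2}{2\eta_\rho^i}+\frac{\eta_\rho^i}{2}$. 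Combining this with $\LL(\rho^{i-1},p_i^*)\le\LL(\rho^{i-1},p_i)$ from Proposition~\ref{prop:1} gives the per-round bound on $\LL(\rho,p_i^*)-\LL(\rho^{i-1},p_i)$ directly, and summing yields $\gamma^2\sqrt n/c_\rho+c_\rho\sqrt n$. In other words, the supergradient you extract via Danskin is the right vector, but you should apply the telescoping to the linearization itself (which \emph{equals} $\LL(\rho,p_i^*)$ up to the constant $D_{KL}(p_i^*\|q_i)$ term) rather than to the concave envelope $\ell_i$; your detour loses exactly the term you then cannot recover.
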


\subsection{Online Proxy Learning}
\label{sec:onproxy}
After optimizing the label by online label learning, we consider reconstructing the class proxy in the vision space to reduce the modality gap from the text proxy as suggested in \cite{inmap}. Given the whole set of images, vision proxy learning can be cast as an optimization problem
\begin{eqnarray}\label{eq:proxy}
\min_{\w} \LL(\w) = -\sum_i\sum_j p_{i,j}^*\log(p'_{i,j})
\end{eqnarray}
where $p_{i,j}^*$ is the output of online label learning and
\begin{eqnarray}\label{eq:pw}
p'_{i,j} = \frac{\exp(\x_i^\top \w_j/\tau_I)}{\sum_{k} \exp(\x_i^\top \w_k/\tau_I)}
\end{eqnarray}
$\w_j$ denotes the learnable vision proxy for the $j$-th class. $\tau_I$ is the temperature that should be larger than $\tau_T$ to obtain the class proxy in the vision space~\cite{inmap}.

When solving the problem in an online manner, only a single example will be received at each iteration. Therefore, the vision proxy will be updated according to the gradient from the $i$-th example as
\begin{eqnarray}\label{eq:w}
\w^{i} = \Pi_{\|\w\|_2=1}(\w^{i-1} - \eta_w^i \nabla \LL(\w^{i-1}, \x_i))
\end{eqnarray}
where $\Pi(\cdot)$ projects the updated vision proxy to the unit norm. Note that the loss function is convex in $\w$, and the convergence can be guaranteed by the standard online learning theory~\cite{online}.

\begin{thm}\label{thm:2}
By updating $\w$ as in Eqn.~\ref{eq:w} and setting $\eta_w^i = c_w/\sqrt{i}$, where $c_w$ is a constant, the regret can be bounded as
\begin{eqnarray}
\frac{1}{n}\sum_i \LL(\w^{i-1}, \x_i) - \min_{\w} \LL(\w)\leq \OO(\frac{1}{\sqrt{n}}) 
\end{eqnarray}
\end{thm}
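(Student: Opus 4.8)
The plan is to recognize Theorem~\ref{thm:2} as the classical regret bound for online projected subgradient descent, instantiated on the sequence of per-example losses $\LL(\w,\x_i):=-\sum_j p_{i,j}^*\log p'_{i,j}$ (so that $\LL(\w)=\sum_i\LL(\w,\x_i)$), where $\w$ enters only through the softmax in Eqn.~\ref{eq:pw}. Two facts need checking before the standard machinery applies. First, each $\LL(\cdot,\x_i)$ is convex: it is the cross-entropy of the fixed vector $p_i^*$ against a softmax whose logits $\x_i^\top\w_k/\tau_I$ are affine in $\w$, and a log-sum-exp composed with an affine map is convex. Second, the subgradients are uniformly bounded: the closed-form softmax gradient is $\nabla_{\w_k}\LL(\w,\x_i)=\tau_I^{-1}(p'_{i,k}-p_{i,k}^*)\x_i$, and since $\|\x_i\|_2=1$ and $p_i',p_i^*\in\Delta$ this gives $\|\nabla\LL(\w,\x_i)\|_2\le\sqrt{2}/\tau_I=:G$ for every $i$ and every feasible $\w$. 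I would take the feasible set to be the unit Euclidean ball $\mathcal B=\{\w:\|\w\|_2\le1\}$, which is convex, has diameter $D=2$, and contains every iterate of Eqn.~\ref{eq:w}; let $\w^\star$ be the minimizer of $\LL$ over $\mathcal B$.

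The heart of the proof is the one-step descent inequality. Using non-expansiveness of the Euclidean projection onto the convex set $\mathcal B$, for any $\w\in\mathcal B$,
\begin{align*}
\|\w^i-\w\|_2^2\le\|\w^{i-1}-\w\|_2^2-2\eta_w^i\langle\nabla\LL(\w^{i-1},\x_i),\w^{i-1}-\w\rangle+(\eta_w^i)^2G^2 ,
\end{align*}
and combining with convexity, $\LL(\w^{i-1},\x_i)-\LL(\w,\x_i)\le\langle\nabla\LL(\w^{i-1},\x_i),\w^{i-1}-\w\rangle$, yields
\begin{align*}
\LL(\w^{i-1},\x_i)-\LL(\w,\x_i)\le\frac{\|\w^{i-1}-\w\|_2^2-\|\w^i-\w\|_2^2}{2\eta_w^i}+\frac{\eta_w^i G^2}{2} .
\end{align*}
Summing over $i=1,\dots,n$, the first terms telescope in the standard way for a non-increasing step size (leaving a head term $D^2/(2\eta_w^n)$), and with $\eta_w^i=c_w/\sqrt i$ and $\sum_{i=1}^n i^{-1/2}\le 2\sqrt n$ one gets $\sum_i\LL(\w^{i-1},\x_i)-\sum_i\LL(\w,\x_i)\le D^2\sqrt n/(2c_w)+c_w G^2\sqrt n=\OO(\sqrt n)$. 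Choosing $\w=\w^\star$ and dividing by $n$ gives precisely $\frac1n\sum_i\LL(\w^{i-1},\x_i)-\frac1n\min_\w\LL(\w)\le\OO(1/\sqrt n)$.

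The one delicate point is that Eqn.~\ref{eq:w} projects onto the unit \emph{sphere} $\{\|\w\|_2=1\}$, which is not convex, while the non-expansive projection argument needs a convex set. I would resolve this by analyzing the ball projection $\Pi_{\mathcal B}$: it agrees with $\Pi_{\|\w\|_2=1}$ whenever the pre-projection point $u=\w^{i-1}-\eta_w^i\nabla\LL(\w^{i-1},\x_i)$ has $\|u\|_2\ge1$, and when $\|u\|_2<1$ a direct computation shows that the Pythagorean inequality for the sphere projection degrades by at most an amount of order $1-\|u\|_2$; since $\|u\|_2^2$ differs from $1$ by at most $2\eta_w^i|\langle\w^{i-1},\nabla\LL(\w^{i-1},\x_i)\rangle|+(\eta_w^i)^2G^2=O(\eta_w^i)$, summing this surplus over the $n$ rounds contributes only $O(\sum_i\eta_w^i)=O(\sqrt n)$, which is absorbed into the $\OO(\sqrt n)$ regret. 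Cleaner still, one may simply take $\mathcal B$ as the feasible set from the outset, since the normalization in Eqn.~\ref{eq:w} is a modeling convenience on which the regret bound does not rely. A minor bookkeeping matter is making the telescoping with a time-varying step size rigorous (an Abel summation using $\|\w^{i-1}-\w\|_2^2\le D^2$ and $1/\eta_w^i\ge1/\eta_w^{i-1}$); this is routine and only affects the constant in front of $\sqrt n$.
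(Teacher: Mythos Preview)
Your proposal is correct and follows essentially the same route as the paper's proof: convexity of the per-example loss, the standard one-step projected-gradient inequality $\LL(\w^{i-1},\x_i)-\LL(\w,\x_i)\le\tfrac{\|\w^{i-1}-\w\|^2-\|\w^i-\w\|^2}{2\eta_w^i}+\tfrac{\eta_w^i}{2}\|\nabla\LL\|^2$, and summation with $\eta_w^i=c_w/\sqrt i$ to obtain an $\OO(\sqrt n)$ cumulative regret. You are in fact more careful than the paper, which simply assumes a gradient bound $\sigma$ and does not discuss the non-convexity of the unit-sphere projection; your explicit gradient computation and your handling of the sphere-versus-ball issue fill gaps the paper leaves implicit.
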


Since images arrive in a random order and only a single image will be leveraged for updating $\w$, the variance in learning can be large, especially for the first few iterations. Unlike offline optimization in \cite{inmap}, where the performance is evaluated after the final iteration, online learning has to accumulate the accuracy of predictions from different timestamps. Therefore, we consider combining the pseudo label $p_i^*$ from online label learning in text space to stabilize the prediction.

\begin{algorithm}[t]
\caption{\textbf{On}line \textbf{Ze}ro-shot \textbf{T}r\textbf{a}nsfer (OnZeta)}
\begin{algorithmic}[1]
\STATE {\bf Input:} Unlabeled image set $\{I_i\}$, class names $\{T_j\}$, pre-trained CLIP encoders $(f,g)$, dual variables $\{\rho_j\}$, vision proxy $\{\w_j\}$, ratio $\alpha$, temperature $\tau_T$ and $\tau_I$, initial learning rate $c_\rho$ and $c_w$, weight parameter $\beta$
\STATE Obtain representation of text proxy as $\z_j = g(T_j)$
\STATE Initialize $\forall j, \rho_j = 0$ and $\w_j = \z_j$
\WHILE{the $i$-th image arrives} 
\STATE Obtain vision representation as $\x_i = f(I_i)$
\STATE Observe the probability $p_i^*$ by Eqn.~\ref{eq:psolution}
\STATE Observe the probability $p_i'$ by Eqn.~\ref{eq:pw}
\STATE Have the prediction as $y_i=\arg\max_j \tilde{p}_{i,j}$ where $\tilde{p}_{i}$ is from Eqn.~\ref{eq:lambda}
\STATE Update dual variables as in Eqn.~\ref{eq:dualupdate}
\STATE Update vision proxy as in Eqn.~\ref{eq:w}
\ENDWHILE
\end{algorithmic}\label{alg:onzeta}
\end{algorithm}

For the $i$-th example, assume that $\hat{p}_i$ is the ground-truth distribution in the vision space. Since $p_i^*$ is mainly from the text proxy, it is a biased estimation from the text space. While $p_i'$ is estimated by the vision proxy, the variance can be large due to the online updating. Therefore, these predictions can be mixed to trade-off bias from text space and variance from vision space. Concretely, let \begin{eqnarray}\label{eq:lambda}
\tilde{p}_i = \lambda_i p_i' + (1-\lambda_i)p_i^*
\end{eqnarray}
then we have
\begin{align}
&E[\|\tilde{p}_i - \hat{p}_i\|_2^2] \nonumber \\
&= E[\lambda_i^2\|p_i' - \hat{p}_i\|_2^2 + (1-\lambda_i)^2\|p_i^* - \hat{p}_i\|_2^2+2\lambda(1-\lambda)\langle p_i' - \hat{p}_i,p_i^* - \hat{p}_i\rangle]
\end{align}

For the sake of simplicity, we assume that $p_i'$ is an unbiased estimation for $\hat{p}_i$ such that $E[p_i'] = \hat{p}_i$. Then
\begin{eqnarray}
    E_w[\|\tilde{p}_i - \hat{p}_i\|_2^2] =\lambda_i^2 \underbrace{E[\|p_i' - \hat{p}_i\|_2^2]}_{\mathrm{variance}} + (1-\lambda_i)^2\underbrace{\|p_i^* - \hat{p}_i\|_2^2}_{\mathrm{bias}}
\end{eqnarray}
To minimize the approximation error, the trade-off ratio $\lambda_i$ has the optimal solution as
\begin{eqnarray}
    \lambda_i^* = \frac{\|p_i^* - \hat{p}_i\|_2^2}{\|p_i^* - \hat{p}_i\|_2^2+E[\|p_i' - \hat{p}_i\|_2^2]}
\end{eqnarray}

We find that the bias can be considered as a constant but the variance will be reduced with the learning of $\w$, which implies a monotonically increasing function for combining the pseudo labels from different spaces. While various weight functions for $\lambda$ can be developed according to the analysis, for the $i$-th iteration, we empirically set 
\begin{eqnarray}\label{eq:beta}
\lambda_i = \beta\sqrt{i/n}
\end{eqnarray}
where $\beta\in[0,1]$ is a constant. The combination strategy helps online transfer when the vision proxy is not trained sufficiently.

Incorporating online label learning and online proxy learning, the proposed \textbf{on}line \textbf{ze}ro-shot \textbf{t}r\textbf{a}nsfer (OnZeta) algorithm is summarized in Alg.~\ref{alg:onzeta}. All operations in the proposed method can be computed efficiently without significantly increasing the cost of conventional zero-shot transfer.

\section{Experiments}
\label{sec:experiments}
To evaluate the proposed method, we conduct experiments on 14 diverse downstream tasks. Considering that text prompts are essential for obtaining the appropriate text proxy for zero-shot transfer, we follow the common practice of including an ensemble of $7$ text prompts as suggested in \cite{clip} to generate the text proxy for baseline and our method. The initial learning rates for dual variables and vision proxy are set as $c_\rho=20$ and $c_w=0.5$, respectively. The learning rate will be decayed as $\eta_\rho^i = c_\rho/\sqrt{i}$ and $\eta_w^i = c_w/\sqrt{i}$ where $i$ denotes the order of the incoming image. Unlabeled images will arrive in a random order. For the temperature, we fix $\tau_T=0.01$ that is the optimized value in CLIP and $\tau_I=0.04$ as suggested in \cite{inmap}. All experiments are implemented on a single V100 GPU.

\subsection{Ablation Study}
The main parameters in our method are the ratio for class assignment $\alpha$ and the ratio for pseudo label combination $\beta$. In this subsection, we will investigate the influence of these parameters along with the setting of online zero-shot transfer. The ablation experiments are conducted on ImageNet with ResNet-50~\cite{resnet} as vision encoder in CLIP.

\subsubsection{Effect of $\alpha$} 
$\alpha$ is the ratio to capture the distribution over the entire data set as in Eqn.~\ref{eq:alpha}. While the vanilla zero-shot transfer ignores the side information from the arrived examples, the proposed online label learning can leverage the information of distribution by $\alpha$. Since $\alpha\in[0,1]$, we vary $\alpha$ in $\{0,0.4,0.6,0.8,1\}$ and summarize the results in \cref{ta:alpha}.

\begin{minipage}{\textwidth}
\begin{minipage}[p]{0.4\textwidth}
\centering
\captionof{table}{Effect of $\alpha$ in Eqn.~\ref{eq:alpha} for online label learning on ImageNet with ResNet-50 as vision encoder. ``\#Min class'' denotes the proportion of examples in the smallest class.}\label{ta:alpha}
\begin{tabular}{|l|c|c|}\hline
$\alpha$&Acc(\%)&\#Min class(\%)\\\hline
0&60.32&13.7\\
0.4&60.41&18.5\\
0.6&60.63&23.0\\
0.8&61.20&33.3\\
1&61.96&39.0\\\hline
\end{tabular}
\end{minipage}
\begin{minipage}[p]{0.54\textwidth}
\centering
\includegraphics[width = \textwidth]{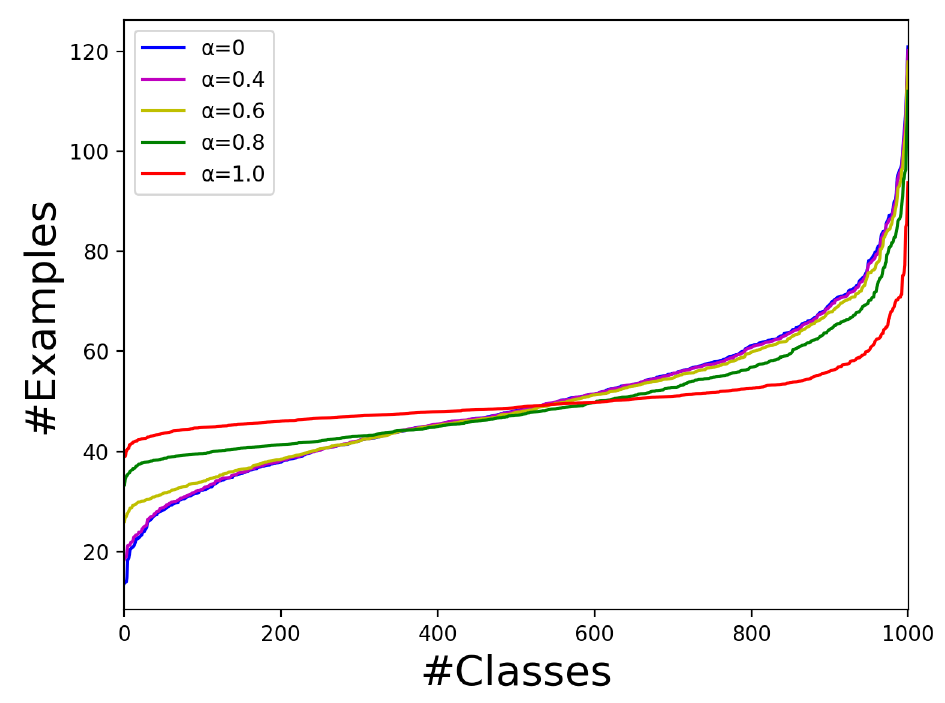}
\captionof{figure}{Illustration of data distribution over $1,000$ classes with different $\alpha$ on ImageNet. Best viewed in color.}\label{fig:alpha}
\end{minipage}
\end{minipage}
\\

First, according to our analysis, $\alpha=0$ will not update dual variables and implies the accuracy of baseline zero-shot transfer as in CLIP. Then, increasing $\alpha$ can consistently improve the accuracy over the baseline, which shows the efficacy of online label learning. Moreover, the size of the smallest class is also increased as $\alpha$ increases and it shows that our proposed method OnLab with only the online label learning can adjust the class assignment effectively. 

To further investigate the data distribution derived by different $\alpha$, we compare the size of all $1,000$ classes in \cref{fig:alpha}. Evidently, the proposed method is capable of balancing the distribution in an online manner. Note that even with $\alpha=1$, the obtained distribution is not perfectly balanced, which is from the approximation error in the online scenario as analyzed in Theorem~\ref{thm:1}. We find that our method is not sensitive to $\alpha$ and fix $\alpha=1$ in the following experiments if not specified. 

\subsubsection{Effect of $\beta$} 

With the predictions from text and vision space in online zero-shot transfer, a combination strategy with $\beta$ in Eqn.~\ref{eq:beta} can trade-off between the bias and variance. \cref{ta:beta} reports the performance with different $\beta$.

\begin{minipage}{\textwidth}
\begin{minipage}{0.57\textwidth}
\centering
\captionof{table}{Effect of $\beta$ in Eqn.~\ref{eq:beta} for online proxy learning on ImageNet with ResNet-50 as vision encoder.}\label{ta:beta}
\begin{tabular}{|l|c|c|c|c|c|c|}\hline
$\beta$&0&0.2&0.4&0.6&0.8&1\\\hline
Acc(\%)&61.96&62.17&62.23&62.38&62.50&61.84\\\hline
\end{tabular}
\end{minipage}
\begin{minipage}{0.37\textwidth}
\centering
\captionof{table}{Comparison of strategies for obtaining the combination ratio $\lambda$ in Eqn.~\ref{eq:lambda}.}\label{ta:lambda}
\begin{tabular}{|l|c|c|}\hline
$\beta$&Fixed&Dynamic\\\hline
Acc(\%)&62.28&62.50\\\hline
\end{tabular}
\end{minipage}
\end{minipage}
\\
\\

When $\beta=0$, only the label from the online label learning (\ie, OnLab) is adopted for prediction. If mixing the label from the vision proxy, the accuracy improves. The mixture prediction outperforms OnLab by $0.54\%$ with $\beta=0.8$ and is $2.18\%$ better than the baseline CLIP. If excluding the label from text space at the end of data streaming as $\beta=1$, the performance degenerates due to the large variance from the vision space. The phenomenon illustrates the challenge of online learning that the vision proxy may not be well learned with a limited number of arrived images and the pseudo label from text proxy can help reduce the variance.

According to our analysis in Section~\ref{sec:onproxy}, a dynamic ratio for the bias-variance trade-off is more effective for the proposed online learning scenario. To verify the claim, we compare the dynamic ratio to the fixed one in \cref{ta:lambda}. The value of fixed $\lambda$ is searched in $[0,1]$ by a step size of $0.1$ and the best result is reported.

While the fixed $\lambda$ still improves the performance from the online label learning, a monotonically increasing $\lambda$ captures the training dynamics better and shows an additional gain of $0.22\%$. It confirms our analysis about the bias-variance trade-off.

\subsubsection{Effect of iterations in optimization}

Our method has to learn the dual variables and vision proxy with streaming data and those parameters may not be learned well at the early stage of online learning. To investigate the influence of iterations in online learning, we report the accumulated accuracy of different iterations in one epoch on ImageNet in \cref{ta:iter}.

\begin{table}[htbp]
\centering
\begin{minipage}[t]{0.35\textwidth}
\centering
\caption{Comparison of accuracy (\%) with different iterations on ImageNet.}\label{ta:iter}
\begin{tabular}{|l|c|c|c|}\hline
\#Iters&$5k$&$10k$&$20k$\\\hline
Acc(\%)&60.96&61.50&61.71\\\hline
&$30k$&$40k$&$50k$\\\hline
&62.02&62.41&62.50\\
\hline
\end{tabular}
\end{minipage}
\begin{minipage}[t]{0.6\textwidth}
\centering
\caption{Effect of epochs for optimization on ImageNet. Prediction from the last epoch is evaluated for OnZeta after running multiple epochs.}\label{ta:epoch}
\resizebox{\linewidth}{!}{
\begin{tabular}{|l|c|c|c|c|c|}\hline
\#Epoch&1(online learning)&2&3&4&5\\\hline
Acc(\%)&62.50&63.24&63.35&63.39&63.46\\\hline
\end{tabular}}
\end{minipage}
\end{table}

First, we can observe that even with $5,000$ iterations for learning, the accuracy over $1,000$ classes is already better than the baseline. With the arrival of more images, the performance increases steadily. It is because that the convergence rate for dual variables and vision proxy is $\OO(1/\sqrt{n})$ as analyzed in Theorems~\ref{thm:1}-\ref{thm:2}. It implies that these variables can reach a reasonable performance with a limited number of iterations, while it is challenging to obtain the optimal solution as discussed in the following experiment.

\subsubsection{Effect of epochs in optimization}
In the proposed online zero-shot transfer problem, all examples can be visited only once when they arrive. The setting is prevalent in real-world applications but very challenging for optimization. This ablation experiment aims to investigate the performance gain when each example can be visited multiple times.

Concretely, we run the proposed method on multiple epochs of data, where images arrive in a random order within each epoch. For multiple epochs, the prediction from the last one is reported for evaluation. \cref{ta:epoch} compares the performance of multiple epochs to that of a single epoch in the online scenario.

First, we can observe that with an additional epoch, the performance of OnZeta can be boosted by $0.74\%$, which demonstrates the challenge of online zero-shot transfer. When increasing the number of epochs, the performance can be further improved and it achieves $63.46\%$ with $5$-epoch learning. The result approaches the performance of learning with the entire data set as in \cite{inmap} that shows a $63.74\%$ accuracy when visiting all data by $2,000$ epochs. The competitive performance of OnZeta confirms the effectiveness of the proposed online learning algorithm. More experiments can be found in the appendix.

\subsection{Comparison on ImageNet}
After the ablation study, we compare our method OnZeta to state-of-the-art methods on ImageNet. Multiple vision encoders from CLIP are adopted for evaluation, which include ResNet-50, ViT-B/32, ViT-B/16, ViT-L/14, and ViT-L/14@336. Considering that images come in a random order in our online setting, the proposed method is repeated by $5$ times and the averaged result is reported for the comparison. We note that the standard deviation is quite small and also report the best result among different runs by $^*$. The same parameters are shared by different vision encoders in our method. \cref{ta:imagenet} summarizes the comparison, where the performance of InMaP~\cite{inmap} is marked by gray due to the application of the whole unlabeled data set. All other methods only access a single image at each iteration. Our OnLab with only the online label learning is denoted as ``OnZeta$_{0.5}$'' and the vanilla zero-shot transfer in Eqn.~\ref{eq:vanilla} is ``Baseline''.

\begin{table}[htbp]
\centering
\caption{Comparison of accuracy (\%) on ImageNet with different vision encoders in CLIP. The overall best performance is in bold. ``-'' denotes that the result is unavailable in their original papers. The averaged performance over $5$ random trials is reported for our method, where the best one is denoted by $^*$. InMaP is marked by gray since it leverages the whole unlabeled image set at each iteration.}\label{ta:imagenet}
\resizebox{\linewidth}{!}{
\begin{tabular}{|l|c|c|c|c|c|c|c|}\hline
Vision encoder&Baseline~\cite{clip}&TPT~\cite{tpt}&OnZeta$_{0.5}$&OnZeta$_{0.5}^*$&OnZeta&OnZeta$^*$&\color{da}InMaP~\cite{inmap}\\\hline
ResNet-50&60.32&60.74&61.89&61.96&62.42&\textbf{62.50}&\color{da}63.74\\
ViT-B/32&63.77&-&65.06&65.18&65.71&\textbf{65.80}&\color{da}67.29\\
ViT-B/16&68.75&68.98&70.18&70.23&70.87&\textbf{70.99}&\color{da}72.55\\
ViT-L/14&75.96&-&76.98&77.07&77.84&\textbf{77.94}&\color{da}79.29\\
ViT-L/14$_{336}$&77.02&-&77.93&78.05&78.75&\textbf{78.94}&\color{da}80.21\\\hline
\end{tabular}}
\end{table}

First, by capturing the overall distribution using only the proposed online label learning component, OnZeta$_{0.5}$ already gains about $1\%$ for all vision encoders over the baseline. It demonstrates that the side information from the arrived images is useful for improving the zero-shot transfer. By learning the vision proxy and combining the predictions from both the text and vision spaces, OnZeta shows about $1\%$ additional improvement over OnZeta$_{0.5}$, which illustrates that online proxy learning is complementary to online label learning for zero-shot classification. Moreover, OnZeta obtains more gains with a large backbone, showing its potential for large models.

Then, we can observe that the prompt learning method TPT outperforms the baseline with an ensemble of artificial text prompts, which indicates the importance of the appropriate prompt for zero-shot transfer. However, OnZeta surpasses TPT by $1.76\%$ on ResNet-50 and $2.01\%$ on ViT-B/16. It is because that we learn a better class proxy in the vision space for visual classification, which helps reduce the modality gap and is more effective than optimizing the text prompt in the text space. The observation is consistent with the analysis in \cite{inmap}.  Moreover, compared with InMaP that can access the whole unlabeled set, OnZeta is only about $1\%$ worse using different vision encoders. As analyzed in the ablation study, online learning that visits each example once without storing is more challenging than the offline method with multiple iterations over the entire set. Nevertheless, online learning is practical for real-world applications. Finally, all vision encoders share the same parameters in experiments. It implies that our method OnZeta is not sensitive to the selection of vision encoder and is applicable in different configurations.

\subsection{Comparison on Other 13 Downstream Tasks}
Besides the comparison on ImageNet, we conduct experiments on other 13 downstream tasks to evaluate the performance of our method. The benchmark data sets for zero-shot transfer are adopted in the comparison, which includes Aircraft~\cite{maji2013fine}, Caltech101~\cite{fei2004learning}, Stanford Cars~\cite{krause20133d}, CIFAR-10~\cite{krizhevsky2009learning}, CIFAR-100~\cite{krizhevsky2009learning}, CUB200-2011~\cite{wah2011caltech}, Describable Textures Dataset (DTD)~\cite{cimpoi2014describing}, EuroSAT~\cite{HelberBDB19}, Flowers~\cite{NilsbackZ08}, Food101~\cite{BossardGG14}, Oxford-IIIT Pet (Pets)~\cite{parkhi2012cats}, Sun397~\cite{XiaoHEOT10}, and UCF101~\cite{abs-1212-0402}. A broad range of downstream tasks are involved by these diverse data sets, \eg, low-resolution image classification, fine-grained visual categorization (FGVC), land cover classification with satellite images, scene categorization, texture recognition, \etc. These tasks contain fewer images than ImageNet, which increases the challenge of our online transfer with the convergence rate of $\OO(1/\sqrt{n})$. Therefore, we reduce $\beta$ from $0.8$ to $0.4$ for all tasks to exploit the label from the text space. For data sets that the baseline already achieves satisfied performance, \eg, Caltech101 and CIFAR-10 with ViT, online label learning can be skipped by letting $\alpha=0$, while $\alpha=0.4$ can provide a slight improvement. Other parameters on the rest data sets remain the same as those for ImageNet. Two distinct vision encoders, \ie, ResNet-50 and ViT-B/16, are applied for evaluation. The performance averaged over $5$ random trials is reported for our method.

\begin{table}[htbp]
\centering
\caption{Comparison of accuracy (\%) on 13 downstream tasks with ResNet-50 and ViT-B/16 as vision encoders in CLIP. The overall best performance is in bold. ``-'' denotes that the result is unavailable in their original papers. The averaged performance over $5$ random trials is reported for our method, where the best one is denoted by $^*$. InMaP is marked by gray since it leverages the whole unlabeled image set at each iteration.}\label{ta:other}
\resizebox{\linewidth}{!}{
\begin{tabular}{|l|c|c|c|c|c|c|c|c|c|c|c|c|c|c|}\hline
&Aircraft&Caltech&Cars&Cifar10&Cifar100&CUB&DTD&EuroSAT&Flowers&Food&Pets&SUN&UCF101&Avg.\\\hline
\multicolumn{13}{|l|}{\textit{ResNet-50:}}\\\hline
Baseline&16.62&86.00&56.31&73.15&40.60&41.37&41.13&26.90&62.97&74.10&81.85&59.25&55.56&55.06 \\
TPT&17.58&\textbf{87.02}&58.46&-&-&-&40.84&28.33&62.69&74.88&84.49&61.46&60.82&-\\
OnZeta$_{0.5}$&18.29&86.17&58.57&75.62&46.96&45.68&43.24&33.79&62.42&76.41&83.05&61.64& 60.12&57.84\\
OnZeta$_{0.5}^*$&18.51&86.45&58.91&75.75&47.22&45.93&43.74&34.32&63.17&76.53&83.59&61.81&60.38&58.18\\
OnZeta&18.31&86.26&59.40&75.84&47.55&46.06&43.38&33.99&63.16&76.83&83.77&62.32& 61.00&58.30\\
OnZeta$^*$&\textbf{18.54}&86.41&\textbf{59.71}&\textbf{75.95}&\textbf{47.71}&\textbf{46.34}&\textbf{43.91}&\textbf{34.35}&\textbf{63.95}&\textbf{76.97}&\textbf{84.66}&\textbf{62.40}&\textbf{61.38}&\textbf{58.64}\\
\color{da}InMaP&\color{da}18.96&\color{da}86.73&\color{da}63.30&\color{da}78.84&\color{da}49.26&\color{da}49.17&\color{da}44.86&\color{da}35.88&\color{da}66.68&\color{da}78.36&\color{da}87.93&\color{da}63.82&\color{da}63.36&\color{da}60.55\\
\hline
\multicolumn{13}{|l|}{\textit{ViT-B/16:}}\\\hline
Baseline&23.13&93.51&66.29&91.09&67.29&49.36&45.09&50.17&66.99&84.43&87.00&65.68&65.21&65.79 \\
TPT&24.78&\textbf{94.16}&66.87&-&-&-&47.75&42.44&68.98&84.67&87.79&65.50&68.04&-\\
OnZeta$_{0.5}$&27.19&93.65&67.99&91.13&69.85&52.81&47.51&56.20&68.49&85.83&88.44&68.06&68.50&68.13\\
OnZeta$_{0.5}^*$&27.48&93.79&68.11&91.18&70.02&53.09&47.87&56.42&69.06&85.97&88.72&68.27&68.86&68.37\\
OnZeta&27.57&93.69&68.74&91.31&70.39&53.66&48.04&56.52&69.37&86.21&89.01&68.87&69.58&68.69\\
OnZeta$^*$&\textbf{28.29}&93.83&\textbf{69.03}&\textbf{91.35}&\textbf{70.52}&\textbf{53.80}&\textbf{48.58}&\textbf{56.74}&\textbf{69.63}&\textbf{86.35}&\textbf{89.32}&\textbf{69.01}&\textbf{69.94}&\textbf{68.95}\\
\color{da}InMaP&\color{da}28.35&\color{da}93.59&\color{da}73.00&\color{da}93.27&\color{da}72.51&\color{da}57.87&\color{da}50.77&\color{da}63.98&\color{da}71.28&\color{da}87.76&\color{da}92.94&\color{da}70.85&\color{da}72.06&\color{da}71.40\\
\hline
\end{tabular}}
\end{table}

The comparison is summarized in \cref{ta:other}. A similar phenomenon as ImageNet can be observed on these downstream tasks. Concretely, online label learning, \ie, OnZeta$_{0.5}$, already improves the performance over baselines, where the gain is $3.12\%$ on ResNet and $2.58\%$ on ViT, respectively. Compared with the baseline, only streaming images are leveraged by our method, and the representation of each arrived image is not stored, which preserves the flexibility of zero-shot transfer but captures the distribution over the entire data set in an online manner. In addition, learning the vision proxy with the pseudo label from the text space further improves the performance with different vision encoders, which demonstrates the effectiveness of learning the class proxy directly in the target vision space. Moreover, OnZeta outperforms TPT on $9$ out of $10$ (only 10 tasks are available in TPT's original paper) data sets with different vision encoders. Compared with TPT that has to optimize the text prompt with multiple augmentations of the image by back-propagation through the text encoder, OnZeta only contains the forward pass to extract the representation of the image, which is more efficient and is capable of online deployment. By comparing the best performance of the proposed method (\ie, indicated by * in the table) to the averaged one, we can find that the gap averaged over 13 data sets is only about $0.3\%$. While the images arrive in a random order, the performance of the proposed method can be theoretically guaranteed as indicated in Theorems~\ref{thm:1}-\ref{thm:2}, which is robust to the order of input images. Finally, most data sets share the same parameters and it implies that OnZeta is not sensitive to hyper-parameters and is applicable for different tasks.

\section{Conclusion}
\label{sec:conclusion}
While CLIP demonstrates impressive zero-shot transfer performance, the information from the target data has not been explored sufficiently. In this work, we investigate an online zero-shot scenario, where a random image arrives at each iteration and the model has to predict its label without storing its representation. By capturing the overall label distribution and refining the class proxy in an online manner, our method OnZeta demonstrates a substantial improvement over the vanilla zero-shot transfer on diverse downstream tasks with different vision encoders. In addition, the operations in the proposed method can be implemented efficiently for real-time applications. Exploring more effective algorithms for challenging online zero-shot transfer can be our future work. 

\section*{Limitations}
In this work, we only consider a single vison-language pre-trained model for online zero-shot transfer, while other pre-trained models, \eg, large language models and diffusion models, can be leveraged to mitigate the challenges from the vision-language models.

% ---- Bibliography ----
%
% BibTeX users should specify bibliography style 'splncs04'.
% References will then be sorted and formatted in the correct style.
%
\bibliographystyle{splncs04}
\bibliography{onlineclip}
\appendix
\section{Theoretical Analysis}
\subsection{Proof of Theorem~1}
\begin{proof}
According to the optimality of the closed-form solution, we have
\begin{align}\label{eq:p}
\forall p_i, \quad \LL(\rho^{i-1}, p_i^*)\leq \LL(\rho^{i-1}, p_i)
\end{align}
With the gradient ascent of dual variables, we have
\begin{align}\label{eq:rho}
&\LL(\rho, p_i^*) - \LL(\rho^{i-1}, p_i^*)= \sum_j (\rho_j^{i-1} - \rho_j)(p_{i,j}^* - \frac{\alpha}{C})\nonumber\\
&\leq \frac{\|\rho^{i-1} - \rho\|_2^2 - \|\rho^i-\rho\|_2^2}{2\eta_\rho^i} +\frac{\eta_\rho^i}{2}
\end{align}
Combining Eqns.~\ref{eq:p} and \ref{eq:rho}, we have
\[\LL(\rho, p_i^*) - \LL(\rho^{i-1}, p_i)\leq \frac{\|\rho^{i-1} - \rho\|_2^2 - \|\rho^i-\rho\|_2^2}{2\eta_\rho^i} +\frac{\eta_\rho^i}{2}\]
Summing $i$ from $1$ to $n$, we have
\begin{align*}
&\sum_i \LL(\rho, p_i^*) - \LL(\rho^{i-1}, p_i)\leq \frac{1}{2}(\sum_i \|\rho^{i-1}-\rho\|_2^2(\frac{1}{\eta_\rho^i} - \frac{1}{\eta_\rho^{i-1}})+\sum_i\eta_\rho^i)
\end{align*}
By assuming $\|\rho\|_2\leq \gamma$ and $\eta_\rho^i = c_\rho/\sqrt{i}$, we have
\[\sum_i \LL(\rho, p_i^*) - \LL(\rho^{i-1}, p_i)\leq \gamma^2\sqrt{n}/c_\rho+c_\rho\sqrt{n}\]
\end{proof}
\subsection{Proof of Theorem~2}
\begin{proof}
Due to the convexity of the loss function, we have
\begin{align*}
&\LL(\w^{i-1}, \x_i) - \LL(\w, \x_i) \leq \langle \nabla \LL(\w^{i-1}, \x_i)), \w^{i-1} - \w\rangle\\
&\leq \frac{\|\w^{i-1}-\w\|_F^2 - \|\w^{i}-\w\|_F^2}{2\eta_w^i} +\frac{\eta_w^i}{2} \|\nabla \LL(\w^{i-1}, \x_i))\|_F^2
\end{align*}
Assuming the norm of the gradient is bounded by $\sigma$ as $\forall i,\|\nabla \LL(\w^{i-1}, \x_i))\|_F\leq \sigma$ and summing $i$ from $1$ to $n$, we have
\begin{align*}
&\LL(\w^{i-1}, \x_i) - \LL(\w, \x_i) \leq \sum_i2C(\frac{1}{\eta_w^i} - \frac{1}{\eta_w^{i-1}})+c_w\sigma^2\sqrt{n}\\
&\leq 2C\sqrt{n}/c_w+ c_w\sigma^2\sqrt{n}
\end{align*}
\end{proof}

\section{Experiments}
\subsection{Ablation Study}
\subsubsection{Effect of text prompts}
Besides the $7$ text prompts for the ensemble, we also benchmark the performance with a single prompt as ``a photo of a \{class name\}'' for the comparison in \cref{ta:prompt}. The performance of ViT-B as vision encoder is also evaluated. We denote the vanilla zero-shot transfer in CLIP as ``Baseline''.

\begin{table}[htbp]
\centering
\caption{Comparison of accuracy (\%) for different text prompt strategies on ImageNet. A single prompt is applied as ``a photo of a \{class name\}'' while $7$ prompts are suggested in \cite{clip,inmap}.}\label{ta:prompt}
\begin{tabular}{|l|c|c|c|c|}\hline
Method&Single prompt & 7 prompts&Single prompt & 7 prompts\\\hline
\multicolumn{3}{|l|}{\textit{ResNet-50:}}&\multicolumn{2}{|l|}{\textit{ViT-B/16:}}\\\hline
Baseline&58.15&60.32&66.72&68.75\\\hline
OnZeta&61.49&62.50&70.01&70.99\\\hline
\end{tabular}
\end{table}

First, we can observe that the ensemble of prompts outperforms the single prompt with a clear margin of about $2\%$ on the baseline. It illustrates that the vanilla zero-shot transfer is sensitive to the selection of text prompts. With the proposed online learning strategy, OnZeta outperforms the baseline with both the single prompt and 7 prompts. Concretely, it surpasses the baseline by more than $3\%$ on the single prompt with different vision encoders and is more than $2\%$ better than the baseline with 7 prompts. The superior performance of OnZeta demonstrates the effectiveness of side information from the arrived examples in the proposed online scenario. Finally, the gap between different prompts shrinks by $1\%$ when applying OnZeta and it shows that our method is less sensitive to the text prompts for obtaining the text proxy.

\subsubsection{Comparision of similarity with different proxies}

Following \cite{inmap}, we have the averaged cosine similarity between each image and its nearest class proxy to depict the modality gap. First, given text proxy from class names, the similarity between images and the text proxy is only $0.26$. After online learning by OnZeta, that between examples and the vision proxy increases to $0.39$, which shows the efficacy of mitigating the modality gap.

\subsubsection{Comparison on different pre-trained models}

Besides CLIP~\cite{clip}, there are many other pre-trained vision-language models. \cref{ta:model} summarizes the performance of our method with backbones from ALBEF~\cite{albef} and BLIP~\cite{blip}, respectively. Compared with CLIP, OnZeta achieves even larger improvement than the standard zero-shot classification baseline. It shows that our method is applicable to different pre-trained models.

\begin{table}[htbp]
\centering
\caption{Comparison of accuracy (\%) on ImageNet with different pre-trained models.}\label{ta:model}
\begin{tabular}{|l|c|c|}\hline
Models&ALBEF~\cite{albef}&BLIP~\cite{blip}\\\hline
Baseline&33.82&47.17\\\hline
OnZeta&41.86&54.85\\\hline
\end{tabular}
\end{table}

\subsubsection{Comparison on variants of ImageNet}

We evaluate the performance of OnZeta on variants of ImageNet with shifted distributions, which include ImageNet-V2~\cite{imagev2}, ImageNet-A~\cite{imagea} and ImageNet-Sketch~\cite{images}, as shown in ~\cref{ta:robust}. With appropriate parameters, our method consistently improves the baseline and it demonstrates the robustness of OnZeta.

\begin{table}[htbp]
\centering
\caption{Comparison of accuracy (\%) for different variants of ImageNet with ViT-L as vision encoder.}\label{ta:robust}
\begin{tabular}{|l|c|c|c|}\hline
Method&ImageNet-V2 & ImageNet-A&ImageNet-Sketch\\\hline
Baseline&70.21&70.85&59.70\\\hline
OnZeta&71.23&72.61&62.71\\\hline
\end{tabular}
\end{table}

\subsubsection{Running time} Finally, we demonstrate the efficiency of online learning. For ImageNet with ResNet-50 as the vision encoder, the total cost including feature extraction for zero-shot CLIP is 1.74 (milliseconds) per image and that of our method is 2.12 (ms). Both can work for real-time applications and the overhead introduced by OnZeta is mild.

\end{document}